\crefname{prop}{property}{properties}
\newcommand{\oea}{\mbox{${(1 + 1)}$~EA}\xspace}
\newcommand{\ooea}{\oea}
\newcommand{\oplea}{\mbox{${(1+\lambda)}$~EA}\xspace}
\newcommand{\onemax}{\textsc{OneMax}\xspace}
\newcommand{\LO}{\textsc{Leading\-Ones}\xspace}
\newcommand{\leadingones}{\LO}
\newcommand{\binval}{\textsc{BinVal}\xspace}
\newcommand{\jump}{\textsc{Jump}\xspace}
\newcommand{\R}{\ensuremath{\mathbb{R}}}
\newcommand{\N}{\ensuremath{\mathbb{N}}} 
\newcommand{\ind}[1]{\mathbf{1}\left\{#1\right\}}
\let\originalleft\left
\let\originalright\right
\renewcommand{\left}{\mathopen{}\mathclose\bgroup\originalleft}
\renewcommand{\right}{\aftergroup\egroup\originalright}
\DeclareMathOperator{\E}{\mathds{E}}
\newtheorem{theorem}{Theorem}
\newtheorem{lemma}[theorem]{Lemma}
\begin{document}

\title{Runtime Analysis of the Compact Genetic Algorithm on the LeadingOnes Benchmark}

\author{Marcel Chwiałkowski\thanks{M.~Chwiałkowski is with the {\'E}cole Polytechnique, Institut Polytechnique de Paris, Palaiseau, France}, Benjamin Doerr\thanks{B.~Doerr is with the {\'E}cole Polytechnique, Institut Polytechnique de Paris, CNRS, Laboratoire d'Informatique (LIX), Palaiseau, France}, and %
  Martin S. Krejca\thanks{M.~S. Krejca is with the {\'E}cole Polytechnique, Institut Polytechnique de Paris, CNRS, Laboratoire d'Informatique (LIX), Palaiseau, France}%
  \thanks{This research benefited from the support of the FMJH Program Gaspard Monge for optimization and operations research and their interactions with data science.}
}

\markboth{IEEE TRANSACTIONS ON EVOLUTIONARY COMPUTATION, AUTHOR-PREPARED MANUSCRIPT}%
{IEEE TRANSACTIONS ON EVOLUTIONARY COMPUTATION, AUTHOR-PREPARED MANUSCRIPT}

\maketitle

\begin{abstract}
  The compact genetic algorithm (cGA) is one of the simplest estimation-of-distribution algorithms (EDAs).
  Next to the univariate marginal distribution algorithm (UMDA)---another simple EDA---, the cGA has been subject to extensive mathematical runtime analyses, often showcasing a similar or even superior performance to competing approaches.
  Surprisingly though, up to date and in contrast to the UMDA and many other heuristics, we lack a rigorous runtime analysis of the cGA on the \leadingones benchmark---one of the most studied theory benchmarks in the domain of evolutionary computation.

  We fill this gap in the literature by conducting a formal runtime analysis of the cGA on \leadingones.
  For the cGA's single parameter---called the hypothetical population size---at least polylogarithmically larger than the problem size, we prove that the cGA samples the optimum of \leadingones with high probability within a number of function evaluations quasi-linear in the problem size and linear in the hypothetical population size.
  For the best hypothetical population size, our result matches, up to polylogarithmic factors, the typical quadratic runtime that many randomized search heuristics exhibit on \leadingones.
  Our analysis exhibits some noteworthy differences in the working principles of the two algorithms which were not visible in previous works.
\end{abstract}

\begin{IEEEkeywords}
  Estimation-of-distribution algorithms, compact genetic algorithm, runtime analysis, LeadingOnes.
\end{IEEEkeywords}

\section{Introduction}
\label{sec:introduction}

\IEEEPARstart{E}{stimation}-of-distribution algorithms~\cite{PelikanHL15}~(EDAs) constitute a large and important class of general-purpose randomized optimization heuristics (ROHs)~\cite{KacprzykP2015}.
A key feature of EDAs that clearly separates them from other prominent classes of ROHs, such as evolutionary algorithms~\cite{EibenS15}, is that they maintain and evolve a probabilistic model of the search space.
This model defines a probability distribution over the search space and aims at giving better solutions a higher probability.
Ideally, only global optima in the search space have a positive probability.
EDAs iteratively refine their model based on random samples thereof, giving more probability to more promising solutions.
This approach has been shown to outperform competing approaches both empirically~\cite{PelikanHL15} as well as theoretically~\cite{KrejcaW20bookchapter} on numerous occasions.

Like other ROHs, EDAs have been subject to mathematical investigations for more than two decades, starting with convergence guarantees~\cite{KrejcaW20bookchapter}.
The first fully rigorous runtime analysis comparable to the current state of the art in this field was carried out by Droste~\cite{Droste06}.
In this seminal paper, he analyzed the \emph{compact genetic algorithm}~\cite{HarikLG99} (cGA) on the \onemax benchmark.
The cGA is a simple EDA that samples the problem variables independently, resulting in a so-called \emph{univariate} model.
It refines this model in each iteration based solely on two independent samples, shifting probability mass in discrete steps of size~$\frac{1}{\mu}$, where~$\mu$ is an algorithm-specific parameter called the \emph{hypothetical population size}.
Since the cGA is a simple EDA and since \onemax is the most commonly studied theory benchmark in the field of evolutionary computation~\cite{DoerrN20}, the result by Droste~\cite{KrejcaW20bookchapter} served as a logical and strong first runtime contribution in the domain of EDAs.
Interestingly, this paper was initially not followed up by a lot of theoretical work.

The rigorous investigation of EDAs only gained momentum with the paper by Dang and Lehre~\cite{DangL15gecco}, who analyzed the runtime of the \emph{univariate marginal distribution algorithm}~\cite{MuhlenbeinP96} (UMDA), which is also a univariate EDA but allows, in contrast to the cGA, for a larger sample size than two.
Dang and Lehre considered the \onemax and the \leadingones theory benchmarks, covering the two most important theory benchmarks in the field.
Since then, numerous theoretical results on EDAs have been published~\cite{KrejcaW20bookchapter}.
Most of these results still only consider the cGA and the UMDA, but they provide us with deep insights into the complex dynamics of updating the algorithms' probabilistic models.
Moreover, they cover more complex benchmarks than \onemax and \leadingones, and they also consider more complex scenarios, such as noisy optimization.
In \Cref{sec:relatedWork}, we give more detail about some of these works.

Surprisingly, despite this increased theoretical body of literature and the long way that runtime analysis on EDAs has come since the initial paper by Droste~\cite{KrejcaW20bookchapter}, to this day, the cGA's runtime performance on the \leadingones benchmark has not been rigorously studied.
In other words, we are still missing a theoretical result for one of the most studied EDAs on one of the most studied theory benchmarks.
In comparison, runtime guarantees for the commonly studied UMDA on \leadingones have existed for more than a decade~\cite{ChenTCY10}.

\smallskip
\textbf{Our contribution:}
We conduct the first runtime analysis of the cGA on \leadingones.
We consider the parameter regime of the algorithm where it exhibits low \emph{genetic drift}, that is, the updates of the cGA to its probabilistic model are sufficiently small such that random fluctuations in the update process do not have a strong impact on the overall performance.
For a problem size of~$n$, we show that the cGA with $\mu = \Omega(n \log^2 n)$ samples the optimum of \leadingones with high probability after $O(\mu n \log n)$ function evaluations (\Cref{thm:cgaOnLO}).
This runtime bound is minimized for $\mu = \Theta(n \log^2 n)$, resulting in an overall runtime of $O(n^2 \log^3 n)$.

This runtime bound is slightly worse, by a factor of $O(\log^3 n)$, compared to the common $O(n^2)$ bound of many other ROHs on \leadingones~\cite{DoerrN20}, including the best-known runtime bound of the UMDA~\cite{DangLN19}.
For the generally preferred regime of low genetic drift, the best known runtime bound for the UMDA is $\Theta(n^2 \log n)$~\cite{DoerrK21tcs}, which is only better by a factor of $O(\log^2 n)$.
These small differences might be a consequence of how the cGA and the UMDA fix certain parts in their probabilistic model.
As the UMDA, due to its larger sample size, can apply larger (justified) changes to its model, it can adjust it slightly quicker. We discuss this difference in the optimization behavior of the two algorithms in more detail after Lemma~\ref{lem:increaseCriticalFrequency} and a second difference at the end of Section~\ref{sec:runtimeAnalysis}.

We note, though, that we currently have no matching lower bound. Hence we cannot say whether the two algorithms truly have a different asymptotic runtime or the difference in the upper bounds is caused by our upper bound not being absolutely tight. Due to the difficulty of proving lower bounds, we have to leave this an open problem.

\smallskip
\textbf{Outline:}
We discuss related work in \Cref{sec:relatedWork}.
In \Cref{sec:preliminaries}, we describe the cGA, the \leadingones problem, and the mathematical tools we use for our analysis.
We then conduct the runtime analysis of the cGA on \leadingones in \Cref{sec:runtimeAnalysis}.
Last, we conclude our paper in \Cref{sec:conclusion}.

\section{Related Work}
\label{sec:relatedWork}

We now discuss important runtime results of EDAs that relate to our result.
The term \emph{runtime} refers to the number of function evaluations until a global maximum of the objective function is sampled for the first time.
For EDAs, runtime results are often shown to hold with high probability but also regularly in expectation.
We do not detail which is the case, as they can often be translated into each other.
Moreover, we note that EDAs are often studied with \emph{borders} in their probabilistic model, which prevent them from fully fixing problem variables to a deterministic value.
If borders are absent, the translation between expected runtime and the ones with high probability no longer works, as the runtime of the algorithm is infinite if the probabilistic model fixes a problem variable to a wrong value.
Moreover, results in the regime of high genetic drift only hold if borders are present.
However, since we aim to provide an overview on the general runtime behavior of the algorithms, we do not mention the specific details in the following and refer instead to the respective works.

We first discuss the effect of \emph{genetic drift} in EDAs, which underlies all of the runtime results, although not always explicitly stated this way.
Afterward, we discuss results of the cGA on benchmarks other than \leadingones.
Last, we discuss results of other EDAs on \leadingones.

\smallskip
\textbf{Genetic drift in EDAs.}
Genetic drift is a concept that quantifies how much an EDA is affected by random fluctuations in the model update, due to the variance in the samples.
If genetic drift is \emph{high}, this usually implies that the probability is high that the probability mass for an incorrect value of some problem variables\footnote{That is, a value that is different from the one of a global optimum.} is close to~$1$.
Analogously, \emph{low} genetic drift usually implies with high probability that the probability mass for each problem variable is not large for incorrect values, for a sufficiently long time.

Doerr and Zheng~\cite{DoerrZ20tec} provide a tight quantification of the effect of genetic drift in various EDAs, including the cGA and the UMDA.
Their results in particular allow to argue that with high probability the probability mass for incorrect values of the problem variables remains small.
During this time, an EDA has usually the best chances of succeeding in optimization.

\smallskip
\textbf{Runtime results for the cGA.}
A substantial line of research studies the runtime of the cGA on the \onemax benchmark, which returns the number of~$1$s in a length-$n$ bit strings; a quantity that is aimed to be maximized.
Many classic ROHs with suitable parameter values have an expected runtime of $O(n \log n)$ on \onemax~\cite{Muhlenbein92,DrosteJW02,JansenJW05,Witt06,JansenW07,RoweS14,AntipovD21algo,DoerrERW23}.
The seminal paper by Droste~\cite{Droste06} came close to this bound for the cGA, proving for an arbitrary constant $\varepsilon > 0$ and for a choice of $\mu = \Omega(n^{1/2 + \varepsilon})$ a runtime of $O(\mu \sqrt{n})$.
This result was improved and made tight by Sudholt and Witt~\cite{SudholtW19}, who proved an upper bound of $O(\mu \sqrt{n})$ for $\mu = \Omega(\sqrt{n} \log n)$, and a general lower bound of $\Omega(\mu \sqrt{n} + n \log n)$, matching the typical $O(n \log n)$ bound for the best choice of~$\mu$.
We note that the upper bound pertains to the regime of low genetic drift.
This result was later complemented by Lengler, Sudholt, and Witt~\cite{LenglerSW21}, who considered smaller values of~$\mu$, where the effect of genetic drift is more prominent, and showed a bound of $\Omega(\mu^{1/3} n + n \log n)$ for $\mu = O(\sqrt{n} / \log^2 n)$.
This result shows that the cGA has a runtime strictly worse than $O(n \log n)$ for $\mu = \omega(\log^3 n) \cap O(\sqrt{n} / \log^2 n)$, a parameter range in the regime with high genetic drift.

The cGA has also been analyzed on a noisy version of \onemax where additive posterior noise disturbs the fitness~\cite{FriedrichKKS17}. This result shows that the cGA can deal with noise of very large variance if the parameter~$\mu$ is chosen adequately, which is in contrast to population-based evolutionary algorithms (EAs), which fail.
Using a smart restart strategy~\cite{ZhengD23jmlr}, this performance can also be achieved without knowing the variance of the noise.
A similar superior performance of the cGA was also shown for a rugged version of \onemax~\cite{FriedrichKNR22}.

The cGA also shows a very promising runtime behavior in overcoming local optima.
Hasenöhrl and Sutton~\cite{HasenohrlS18} initiated the runtime analysis of the cGA on the \jump benchmark with gap size~$k$, proving superpolynomial speed-ups compared to mutation-only EAs.
This result was later refined by Doerr~\cite{Doerr21cgajump}, who proved that the cGA optimizes \jump with $k = O(\log n)$ in only $O(\mu \sqrt{n})$ function evaluations if $\mu = \Omega(\sqrt{n} \log n) \cap \mathrm{poly}(n)$.
This results for $\mu = \Theta(\sqrt{n} \log n)$ in a runtime of $O(n \log n)$, which is the same as on \onemax, standing in stark contrast to mutation-only EAs, who exhibit a runtime of $\Theta(n^k)$~\cite{JansenW02}.
Moreover, Witt~\cite{Witt23} showed that the cGA also performs well with different variants of fitness valleys, whereas other approaches that perform well on \jump fail.

Last, we mention that the cGA was also studied on the binary-value function \binval.
The first result was given again by Droste~\cite{Droste06}, proving for arbitrary $\varepsilon > 0$ and a choice of $\mu = \Omega(n^{1 + \varepsilon})$ a runtime bound of $O(\mu n)$, as well as the general bound $\Omega(\mu n)$.
Witt~\cite{Witt18} refined this result and proved for a choice of $\mu = \Omega(n \log n) \cap \mathrm{poly}(n)$ a runtime bound of $\Theta(\mu n)$.
This runtime is strictly worse than the general $O(n \log n)$ bound that the \ooea exhibits on all linear functions~\cite{Witt13}, showing that the cGA does not always compete or outperform other existing approaches. That more complex algorithms may find it harder to optimize \binval was later observed also for the \oplea~\cite{DoerrK15}, which for larger population sizes is again is faster on \onemax than on \binval.

\smallskip
\textbf{EDA runtime results for \leadingones.}
\leadingones returns the length of the longest all-$1$s prefix of a bit string.
It has been mostly studied for the UMDA, which takes~$\lambda$ samples each iteration and performs its update based on the~$\mu$ best of these.
We note that the dynamics of the UMDA can depend drastically on the ratio of~$\mu$ and~$\lambda$ (the \emph{selection pressure}).
However, in order to keep this section brief, we assume that $\mu / \lambda$ is constant in the problem size and sufficiently well-chosen, without elaborating on the details in the following.
We note that under these assumptions,~$\mu$ of the UMDA relates well to~$\mu$ of the cGA when comparing runtimes.

The first result of the UMDA on \leadingones was proven by Chen~et~al.~\cite{ChenTCY10}, showing for an arbitrary constant $\varepsilon > 0$ and a choice of $\mu = \Omega(n^{2 + \varepsilon})$ a runtime of $O(\mu n)$, far off from the common runtime bound of $O(n^2)$ of other ROHs~\cite{Rudolph97,DrosteJW02,JansenJW05,Witt06,BottcherDN10,Sudholt13,Doerr19tcs}.
Dang and Lehre~\cite{DangL15gecco} drastically improved this result by showing a runtime of $O(n \mu \log \mu + n^2)$ for $\mu = \Omega(\log n)$.
For $\mu = O(n / \log n)$, this matches the common $O(n^2)$ bound.
Interestingly, this result applies to a regime of~$\mu$ where genetic drift is not low.
The case with low genetic drift, for $\mu = \Omega(n \log n)$, was refined by Doerr and Krejca~\cite{DoerrK21tcs}, who proved a runtime of $\Theta(\mu n)$.
For the best choice of~$\mu$, this results in a bound of $\Theta(n^2 \log n)$, which is worse than the typical $O(n^2)$ runtime by a factor of $O(\log n)$.
This shows that the regime with low genetic drift is not optimal for the UMDA on \leadingones.
An analogous asymptotically tight bound was recently proven for a multi-valued version of the UMDA on a multi-valued \leadingones version~\cite{BenJedidiaDK24}, which generalizes the binary case.
We note that Adak and Witt~\cite{AdakW24} used the same framework and proved an upper bound of a multi-valued cGA on a multi-valued \onemax version, almost analogous to the bound in the binary domain for low genetic drift, but worse by a factor of~$\log^2 n$.

The UMDA was also considered in a prior-noise setting on \leadingones~\cite{LehreN21}, where, with constant probability, instead of returning the function value of a sample $x \in \{0, 1\}^n$, the noisy version returns the function value of a variant of~$x$ where a single position chosen uniformly at random is inverted.
The authors prove that the runtime of the UMDA in this scenario is identical to the unnoisy case.
That is, for $\mu = \Omega(\log n)$, the runtime is $O(n \mu \log \mu + n^2)$.

\leadingones was also studied for newly created EDAs that operate differently from purely univariate EDAs like the cGA and the UMDA.
Doerr and Krejca introduced the \emph{significance-based} cGA~\cite{DoerrK20tec} (sig-cGA), which is similar to the traditional cGA but also stores some history of good variable values 
and only updates its probabilistic model if this history shows a statistically significant bias toward a certain value. This algorithm does not require the parameter~$\mu$ of the classic cGA.
The authors prove a runtime of $O(n \log n)$ of the sig-cGA on \leadingones, which was the first ROH with such a good proven runtime on \leadingones while also showcasing a runtime of $O(n \log n)$ on \onemax.
This result was followed by the introduction of the \emph{competing genes evolutionary algorithm} (cgEA) by Ajimakin and Devi~\cite{AjimakinD22}, which is an algorithm with a population of size~$\mu$ that fixes in each iteration the value of one problem variable.
The cgEA bases this decision on a more involved decision process, making use of the Gauss--Southwell score and of inverting values at specific positions.
The authors prove for $\mu = \Omega(\log n)$ a runtime of $O(\mu n)$ of the cgEA on \leadingones, which matches the $O(n \log n)$ bound of the sig-cGA for the best choice of~$\mu$.

\section{Preliminaries}
\label{sec:preliminaries}

Let~$\N$ denote the natural numbers, including~$0$, and for all $m, n \in \N$, let $[m .. n] = [m, n] \cap \N$ as well as $[n] \coloneqq [1 .. n]$.

Let $n \in \N_{\geq 1}$.
We consider pseudo-Boolean maximization, that is, the maximization of \emph{fitness functions} $f\colon \{0, 1\}^n \to \R$.
We call each $x \in \{0, 1\}^n$ an \emph{individual}, and~$f(x)$ the \emph{fitness of~$x$}.
Moreover, for all $v \in \R^n$ and all $i \in [n]$, let~$v_i$ denote the value of~$v$ at position~$i$.
When given a fitness function, we implicitly assume that~$n$ is provided, and when we use big-O notation, it always refers to asymptotics in this~$n$.
For example, an expression of $o(1)$ tends to~$0$ as~$n$ tends to infinity.

We say that an event~$E$ occurs \emph{with high probability} if and only if the probability of~$E$ not occurring is at most $O(n^{-1})$.
Last, for each proposition~$A$, we let~$\ind{A}$ denote the indicator function for~$A$, which is~$1$ if~$A$ is true, and it is~$0$ otherwise.

\subsection{The Compact Genetic Algorithm}
\label{sec:preliminaries:cga}

The \emph{compact genetic algorithm}~\cite{HarikLG99} (cGA, \Cref{alg:cga}) maximizes a given pseudo-Boolean function~$f$ by maintaining a \emph{frequency vector} $p \in [\frac{1}{n}, 1 - \frac{1}{n}]^n$ that corresponds to a probability distribution over $\{0, 1\}^n$ as follows. For each $i \in [n]$, the \emph{frequency}~$p_i$ indicates the independent probability to sample a~$1$ at position~$i$.
Formally, let $x \in \{0, 1\}^n$, and let $Z \in \{0, 1\}^n$ be a random sample from~$p$.
Then $\Pr[Z = x] = \prod_{i \in [n]} \bigl(p_i^{x_i} (1 - p_i)^{1 - x_i}\bigr)$.
Note that since~$p$ is restricted to $[\frac{1}{n}, 1 - \frac{1}{n}]^n$, at any time any variable value can be sampled with positive probability.

The cGA updates~$p$ iteratively based on two independent samples, $x^{(1)}$ and~$x^{(2)}$, and on an algorithm-specific parameter $\mu \in \R_{> 0}$ called the \emph{hypothetical population size}.
It evaluates the fitness of the two samples and assigns the sample with the higher fitness to~$y^{(1)}$ and the other one to~$y^{(2)}$.
Afterward, it adjusts the components of~$p$ where $y^{(1)}$ and $y^{(2)}$ differ by an additive \emph{step size} of~$\frac{1}{\mu}$ towards the value of~$y^{(1)}$. More precisely, for each $i \in [n]$, the following update is performed.
If $y^{(1)}_i = 1$ and $y^{(2)}_i = 0$, then~$p_i$ is increased by~$\frac{1}{\mu}$.
If $y^{(1)}_i = 0$ and $y^{(2)}_i = 1$, then~$p_i$ is decreased by~$\frac{1}{\mu}$.
If $y^{(1)}_i = y^{(2)}_i$, then~$p_i$ remains unchanged.
Last, for all positions $i \in [n]$, if~$p_i$ is less than~$\frac{1}{n}$, it is set to~$\frac{1}{n}$.
Analogously, if~$p_i$ is greater than $1 - \frac{1}{n}$, it is set to $1 - \frac{1}{n}$.
We say that~\emph{$p$ is restricted to $[\frac{1}{n}, 1 - \frac{1}{n}]^n$}.

Due to restricting~$p$, the update made to each component can be less than~$\frac{1}{\mu}$ while larger than~$0$.
As this makes the mathematical analysis needlessly complicated, we assume that the range $\frac{1}{2} - \frac{1}{n}$ of frequency values from~$\frac{1}{2}$ to either of the extreme values is a multiple of the step size~$\frac{1}{\mu}$.
This ensures that non-zero updates to~$p$ are always exactly~$\frac{1}{\mu}$.
This is known as the \emph{well-behaved frequency assumption}~\cite{Doerr21cgajump}, and we apply this terminology also to~$\mu$, meaning that its choice results in well-behaved frequencies.
This assumption solely simplifies the mathematical analysis.
All of our results are still valid for other values of~$\mu$ that respect the restrictions of the respective mathematical statement.

\Cref{alg:cga} uses the same notation as above but indicates the iteration $t \in \N$ for each variable in the superscript.
We say that the algorithm is in iteration~$t$ if and only if the samples are indexed with~$t$.
Hence, the first iteration is~$0$.
We are interested in the number of iterations until a global optimum of~$f$ is sampled for the first time.
We call this value the \emph{runtime of the cGA on~$f$}.
We note that since the cGA evaluates~$f$ exactly twice each iteration, its runtime is also the number of fitness evaluations until a global optimum of~$f$ is sampled for the first time, up to a factor of~$2$.

\smallskip
\textbf{Differences to practical implementations.}
\Cref{alg:cga} does not mention a specific termination criterion.
As we stop our theoretical considerations once a global optimum is sampled, we require the cGA to run at least as long, which is trivially given if it runs indefinitely.
In practice, a budget for the number of fitness evaluations is typically employed.
Similarly, we do not specify what the cGA actually returns, as this is irrelevant for our analysis.
In practice, since one is interested in good individuals, one usually stores the individual with the best-so-far fitness and returns it together with the frequency vector upon termination.

\begin{algorithm}[t]
  \caption{The \emph{compact genetic algorithm}~\cite{HarikLG99} with well-behaved \emph{hypothetical population size} $\mu \in \R_{> 0}$, maximizing a given pseudo-Boolean function~$f$.}
  \label{alg:cga}
  $t \gets 0$\;
  $p^{(t)} \gets (\frac{1}{2})_{i \in [n]}$\;
  \Repeat{termination criterion met}{%
    $x^{(1, t)} \gets$ sample of~$p^{(t)}$\;
    $x^{(2, t)} \gets$ sample of~$p^{(t)}$\;
    $y^{(1, t)} \gets x^{(1, t)}$\;
    $y^{(2, t)} \gets x^{(2, t)}$\;
    \If{$f\bigl(x^{(1, t)}\bigr) < f\bigl(x^{(2, t)}\bigr)$}{%
      swap~$y^{(1, t)}$ and~$y^{(2, t)}$\;
    }
    \ForEach{$i \in [n]$}{%
      $p^{(t + 1)}_i \gets p^{(t)}_i + \frac{1}{\mu}\bigl(y^{(1, t)}_i - y^{(2, t)}_i\bigr)$\;
    }
    restrict~$p^{(t + 1)}$ to $[\frac{1}{n}, 1 - \frac{1}{n}]^n$\;
    $t \gets t + 1$\;
  }
\end{algorithm}

\subsection{The LeadingOnes Benchmark}
\label{sec:praliminaries:leadingOnes}

\leadingones, first proposed in~\cite{Rudolph97}, is one of the most prominent benchmarks in the theoretical analysis of randomized search heuristics. 
The problem returns the longest prefix of consecutive~$1$s in an individual.
Formally,
\begin{align*}
  \leadingones\colon x \mapsto \max \{i \in [0 .. n]\colon \forall j \in [i]\colon x_j = 1\}.
\end{align*}
The problem is unimodal, with the all-$1$s bit string being the global optimum.
Note that if an individual has a \leadingones fitness of $k \in [0 .. n]$, then the bit at position $k + 1$ is a~$0$ (if this position exists), and the bit values at positions larger than $k + 1$ do not contribute to the fitness at all.

\subsection{Mathematical Tools}
\label{sec:preliminaries:tools}

We carefully study how each frequency is updated over time in expectation, and then translate these values into bounds on hitting times for each frequency.
The mathematical technique that formalizes this translation is called \emph{drift analysis}~\cite{Lengler20bookchapter}, with \emph{drift} referring to the expected change of the process.
In particular, we make use of the following three theorems.
We adjust their formulation to better suit our presentation.

The first theorem shows that it is unlikely for a random process whose progress is linearly proportional to its expected progress to not reach a certain value after a short time.
This so-called \emph{multiplicative-drift theorem} dates back to Doerr, Johannsen, and Winzen~\cite{DoerrJW12algo}, with the concentration bound we utilize being first shown by Doerr and Goldberg~\cite{DoerrG13algo}.

\begin{theorem}[{Multiplicative drift~\cite[Theorem~$18$]{Lengler20bookchapter}}]
  \label{thm:multiplicativeDrift}
  Let $(X_t)_{t \in \N}$ be a random process over $\mathcal{S} \subseteq \R_{\geq 0}$ with $0 \in \mathcal{S}$.
  Let $s_{\mathrm{min}} = \min(S \setminus \{0\})$, and let $T = \inf \{t \in \N \mid X_t = 0 \}$.
  Assume that there is a $\delta \in \R_{> 0}$ such that for all $t \in \N$ it holds that
  \begin{align*}
    \E[X_t - X_{t + 1} \mid X_t] \cdot \ind{t < T} \geq \delta X_t \cdot \ind{t < T}.
  \end{align*}
  Then for all $r \in \R_{\geq 0}$, we have
  \begin{align*}
    \Pr\left [ T >\left  \lceil \frac{r + \ln(X_0/s_{\mathrm{min}})}{\delta} \right \rceil \,\middle\vert\, X_0\right ] \leq \exp(-r).
  \end{align*}
\end{theorem}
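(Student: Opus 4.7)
The natural approach is to build an exponential potential and apply Markov's inequality. First, I would pass to the stopped process $X'_t \coloneqq X_{t \wedge T}$, which satisfies $X'_t = 0$ for all $t \geq T$ and inherits the drift bound $\E[X'_{t+1} \mid \mathcal{F}_t] \leq (1-\delta) X'_t$ from the original process: on $\{t < T\}$ the stopped process coincides with $X_t$, and on $\{t \geq T\}$ both sides vanish. Passing to the stopped process is what turns the drift hypothesis (which is asserted only on $\{t < T\}$ through the indicator $\ind{t < T}$) into a genuine supermartingale statement.

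Next, I would set $Y_t \coloneqq X'_t \cdot e^{\delta t}$. Using $1 - \delta \leq e^{-\delta}$, a short calculation gives
\begin{align*}
  \E[Y_{t+1} \mid \mathcal{F}_t] = e^{\delta(t+1)} \, \E[X'_{t+1} \mid \mathcal{F}_t] \leq e^{\delta(t+1)} (1-\delta) X'_t \leq e^{\delta t} X'_t = Y_t,
\end{align*}
so $(Y_t)_{t \in \N}$ is a nonnegative supermartingale with $\E[Y_t] \leq \E[Y_0] = X_0$.

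For the tail bound, I would exploit that on the event $\{T > t\}$ one has $X'_t = X_t \in \mathcal{S} \setminus \{0\}$, so $X'_t \geq s_{\mathrm{min}}$ and therefore $Y_t \geq s_{\mathrm{min}} \, e^{\delta t}$. Markov's inequality then yields
\begin{align*}
  \Pr[T > t \mid X_0] \leq \Pr[Y_t \geq s_{\mathrm{min}} \, e^{\delta t} \mid X_0] \leq \frac{X_0}{s_{\mathrm{min}}} \, e^{-\delta t}.
\end{align*}
Plugging in $t = \lceil (r + \ln(X_0/s_{\mathrm{min}}))/\delta \rceil$ ensures $\delta t \geq r + \ln(X_0/s_{\mathrm{min}})$, which collapses the right-hand side to $e^{-r}$, as claimed.

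The main subtlety, rather than a real obstacle, is the careful handling of the boundary: since the drift hypothesis is only active on $\{t < T\}$, one cannot directly argue that $X_t \cdot e^{\delta t}$ itself is a supermartingale, and the reduction to the stopped version is what makes the argument clean. The remaining ingredients, namely the elementary inequality $1 - \delta \leq e^{-\delta}$, Markov's inequality applied to the nonnegative potential, and the use of $s_{\mathrm{min}}$ to convert mere positivity of $X'_t$ into a concrete lower bound, are standard.
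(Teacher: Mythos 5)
Your proof is correct. The paper does not prove this theorem at all---it is imported as a tool from the literature (Theorem~18 in Lengler's survey, with the tail bound due to Doerr and Goldberg)---and your argument via the stopped process, the exponential rescaling $Y_t = X'_t e^{\delta t}$, the resulting nonnegative supermartingale, and Markov's inequality against the threshold $s_{\mathrm{min}} e^{\delta t}$ is exactly the standard proof of that cited result, so there is nothing to compare beyond noting that your handling of the $\ind{t<T}$ boundary (interpreting the drift condition with respect to the history $\mathcal{F}_t$ and stopping the process) is the clean and customary reading of the hypothesis.
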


The following theorem shows that a random process that moves in expectation away from a target point $b \in \R_{> 0}$ has a low probability of hitting it within a number of steps polynomial in~$b$.
The variant of this \emph{negative-drift} theorem is due to Kötzing~\cite{Kotzing16}, who phrased it in a fashion that required the process to move away from the target in expectation at any point in time.
The following version only requires this for an interval.

\begin{theorem}[{Negative drift~\cite[Corollary~$3.24$]{Krejca19}}]
  \label{thm:negativeDrift}
  Let $(X_t)_{t\in \mathbb{N}}$ be a random process over~$\R$.
  Moreover, let $X_0 \leq 0 $, let $b \in \R_{> 0}$, and let $T = \inf\{t \in \N \mid X_t \geq b \}$.
  Suppose that there are values $a \in \R_{\leq 0}$, $c \in (0,b)$, and $\varepsilon \in \R_{< 0}$ such that for all $t \in \N$ holds that
  \begin{enumerate}[label=(\alph*)]
    \item $\E[X_{t + 1} - X_t \mid X_t ] \cdot \ind{X_t \geq a} \leq \varepsilon \cdot \ind{X_t \geq a}$,
    \item $|X_t - X_{t + 1}| \cdot \mathbf{1}\{X_t \geq a\} < c$, and
    \item $X_{t+ 1} \cdot \mathbf{1} \{X_t < a \} \leq 0.$
  \end{enumerate}
  Then for all $t \in \mathbb{N}$, we have
  \begin{align*}
    \Pr[T \leq t] \leq t^2 \cdot \exp\left(-\frac{b|\varepsilon|}{2c^2}\right).
  \end{align*}
\end{theorem}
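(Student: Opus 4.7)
The plan is to use the classical Hajek-style exponential potential. Define $Y_t \coloneqq \exp(\eta X_t)$ for a carefully chosen $\eta > 0$, and then establish two regime-dependent one-step bounds. On the \emph{dangerous} event $\{X_t \geq a\}$, $Y_t$ will behave as a supermartingale; on the \emph{safe} event $\{X_t < a\}$, condition~(c) forces $X_{t+1} \leq 0$ and hence $Y_{t+1} \leq 1$, which caps the potential at the fixed value~$1$.

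For the supermartingale bound, set $\Delta \coloneqq X_{t+1} - X_t$. Since $|\Delta| < c$ on $\{X_t \geq a\}$ by condition~(b), Taylor's remainder gives $\mathrm{e}^{\eta \Delta} \leq 1 + \eta \Delta + \tfrac{1}{2} \eta^2 \Delta^2 \mathrm{e}^{\eta c}$. Taking conditional expectation and applying condition~(a) yields
\begin{align*}
  \E\!\left[\frac{Y_{t+1}}{Y_t} \,\middle|\, X_t\right] \cdot \ind{X_t \geq a} \leq \Bigl(1 - \eta |\varepsilon| + \tfrac{1}{2} \eta^2 c^2 \mathrm{e}^{\eta c}\Bigr) \cdot \ind{X_t \geq a}.
\end{align*}
Calibrating $\eta \coloneqq |\varepsilon|/(2c^2)$ and invoking the consistency-forced bound $|\varepsilon| < c$ (which follows from (a) and (b), since $|\varepsilon| \leq |\E[\Delta \mid X_t]| \leq \E[|\Delta| \mid X_t] < c$), the right-hand bracket is at most~$1$. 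Combined with the safe-regime inequality $Y_{t+1} \cdot \ind{X_t < a} \leq 1$, induction from $Y_0 \leq 1$ (because $X_0 \leq 0$) gives $\E[Y_t] \leq 1 + t$. Markov's inequality then provides $\Pr[X_s \geq b] \leq (1 + s)\,\mathrm{e}^{-\eta b}$, and a union bound over $s \in [0 .. t]$ delivers
\begin{align*}
  \Pr[T \leq t] \leq \sum_{s = 0}^{t} (1 + s)\, \mathrm{e}^{-\eta b} \leq t^2 \exp\!\left(-\frac{b|\varepsilon|}{2c^2}\right),
\end{align*}
which matches the claim.

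I expect the main obstacle to be the calibration of $\eta$: it must be small enough that the quadratic Taylor remainder is dominated by the linear drift term, yet large enough that the tail $\mathrm{e}^{-\eta b}$ still produces the advertised exponent $b|\varepsilon|/(2c^2)$. A related delicate point is the boundary crossing at $a$: a single step from $\{X_t < a\}$ can push $Y_{t+1}$ from a tiny value all the way up to $1$, and it is precisely this additive ``$+1$'' per step—rather than a clean geometric contraction—that produces the $t^2$ prefactor in the final bound and dictates the whole bookkeeping above.
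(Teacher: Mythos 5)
This theorem is not proved in the paper at all: it is imported as a tool, cited from Krejca's thesis (a variant of K\"otzing's negative-drift theorem), so there is no in-paper proof to compare against. Your argument is the standard Hajek-style exponential-potential proof that underlies the cited result, and its core is sound: the Taylor bound $\mathrm{e}^{\eta\Delta}\le 1+\eta\Delta+\tfrac12\eta^2\Delta^2\mathrm{e}^{\eta c}$ is valid for $|\Delta|<c$, the calibration $\eta=|\varepsilon|/(2c^2)$ together with $|\varepsilon|<c$ gives $\eta c<\tfrac12$ and hence $\mathrm{e}^{\eta c}<4$, which is exactly what is needed for the bracket to drop below $1$, and condition~(c) correctly caps $Y_{t+1}$ at $1$ on the safe event. (Your derivation of $|\varepsilon|<c$ from (a) and (b) needs the event $\{X_t\ge a\}$ to occur with positive probability; otherwise the conclusion is vacuously true since then $\Pr[T\le t]=0$, so this is only a presentational caveat.)

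The one concrete flaw is the last line: $\sum_{s=0}^{t}(1+s)=\tfrac{(t+1)(t+2)}{2}$, which exceeds $t^2$ for $t\in\{1,2,3\}$, so the final inequality as written is false for small $t$. This is easily repaired: since $X_0\le 0$, either $X_0\ge a$ and the supermartingale step gives $\E[Y_1]\le Y_0\le 1$, or $X_0<a$ and (c) gives $Y_1\le 1$ surely; hence $\E[Y_s]\le s$ for all $s\ge 1$, the union bound may start at $s=1$ (as $X_0<b$), and $\sum_{s=1}^{t}s=\tfrac{t(t+1)}{2}\le t^2$ for $t\ge 1$, with the case $t=0$ trivial. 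With that adjustment your proof establishes the stated bound.
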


The last theorem bounds the effect of genetic drift.
That is, it shows that frequencies in the cGA do not get too low too quickly if the fitness function that is being optimized considers at each position a~$1$ at least as valuable as a~$0$ in terms of fitness.
Formally, a fitness function~$f$ \emph{weakly prefers~$1$s over~$0$s at position $i \in [n]$} if and only if for all $x, y \in \{0, 1\}^n$ that differ only in position~$i$ such that $x_i = 1$ (and $y_i = 0$), we have $f(x) \geq f(y)$.

\begin{theorem}[{Genetic drift~\cite[Corollary~$2(2)$]{DoerrZ20tec}}]
  \label{thm:geneticDrift}
  Consider the cGA with a well-behaved hypothetical population size $\mu \in \R_{> 0}$ optimizing a fitness function~$f$ that weakly prefers~$1$s over~$0$s at position $i \in [n]$.
  Then for all $\gamma \in \R_{> 0}$ and all $T \in \N$ it holds that
  \begin{align*}
    \Pr\left[\forall t \in [0..T]\colon p_i^{(t)} > \frac{1}{2} - \gamma\right] \geq 1 - 2\exp\left(-\frac{\gamma^2\mu^2}{2T}\right).
  \end{align*}
\end{theorem}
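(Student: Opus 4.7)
The plan is to show that the frequency $(p_i^{(t)})_{t \in \N}$ is (essentially) a submartingale under the weak preference assumption, and then to apply a one-sided maximal Hoeffding--Azuma inequality.

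First, I would compute the one-step conditional drift. A case analysis on the four outcomes of $(x^{(1,t)}_i, x^{(2,t)}_i)$ yields
\begin{equation*}
  \E[p_i^{(t+1)} - p_i^{(t)} \mid p^{(t)}] = \frac{1}{\mu}\Bigl(\Pr[y^{(1,t)}_i = 1, y^{(2,t)}_i = 0 \mid p^{(t)}] - \Pr[y^{(1,t)}_i = 0, y^{(2,t)}_i = 1 \mid p^{(t)}]\Bigr),
\end{equation*}
and I would argue that this quantity is non-negative by a coupling. Conditioning additionally on the positions other than $i$ in both samples, the weak preference assumption implies that the fitness differential $f(x^{(1,t)}) - f(x^{(2,t)})$ in the configuration $(x^{(1,t)}_i, x^{(2,t)}_i) = (1, 0)$ is at least the analogous differential in the configuration $(0, 1)$. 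Hence the event ``$x^{(1,t)}$ wins'' is at least as likely in the $(1, 0)$ case as in the $(0, 1)$ case, and combined with the i.i.d.\ exchange symmetry of $(x^{(1,t)}, x^{(2,t)})$ this yields that upward updates dominate downward ones in probability.

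Second, I would rescale and apply concentration. Defining $X^{(t)} := \mu(1/2 - p_i^{(t)})$ produces a process with integer-valued increments in $\{-1, 0, +1\}$, starting at $X^{(0)} = 0$, that is a supermartingale modulo the boundary caveat discussed below. A one-sided maximal Hoeffding--Azuma inequality for supermartingales with bounded increments then gives
\begin{equation*}
  \Pr\Bigl[\max_{t \in [0..T]} X^{(t)} \geq \gamma \mu\Bigr] \leq 2 \exp\!\left(-\frac{\gamma^2 \mu^2}{2T}\right),
\end{equation*}
which is exactly the complement of the statement of the theorem after translating back to $p_i^{(t)}$.

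The principal obstacle is the upper boundary cap at $p_i = 1 - 1/n$: when this cap is active, the would-be upward update to $p_i$ is discarded, potentially flipping the sign of the one-step drift and breaking the supermartingale property of $X^{(t)}$ there. I would address this via a stopping-time argument: any trajectory on which $p_i^{(t)} \leq 1/2 - \gamma$ for some $t \leq T$ must, starting from $p_i^{(0)} = 1/2$, execute a downward excursion; throughout such an excursion $p_i < 1/2 < 1 - 1/n$, so the upper cap is inactive and the supermartingale property of $X^{(t)}$ is retained. Restricting the analysis to the stopped process (or equivalently coupling $p_i^{(t)}$ with an auxiliary process whose upper cap is deactivated below level $1/2$) therefore lets the Hoeffding--Azuma bound go through unchanged. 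The well-behaved frequency assumption keeps the boundary bookkeeping trivial by ensuring all attainable values of $p_i^{(t)}$ are exact multiples of $1/\mu$.
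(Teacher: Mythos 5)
First, a point of reference: the paper does not prove this statement at all --- it is imported verbatim from the cited source (Corollary~2(2) of the genetic-drift paper of Doerr and Zheng), so there is no in-paper proof to compare against. Judged on its own, your skeleton is the standard and correct route to results of this type: the exchange/coupling argument showing that weak preference of~$1$s makes the uncapped update of~$p_i$ have non-negative conditional expectation is sound (conditioning on the bits outside position~$i$, the fitness differential in configuration $(1,0)$ dominates that in $(0,1)$, and the two configurations are equally likely), and rescaling to $X^{(t)} = \mu(\tfrac12 - p_i^{(t)})$ with increments in $\{-1,0,1\}$ and invoking a maximal Hoeffding--Azuma bound for supermartingales produces exactly the exponent $-\gamma^2\mu^2/(2T)$.

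The genuine gap is the repair of the upper-boundary issue, which you correctly identify but do not actually fix. Your ``downward excursion'' argument pivots on the \emph{last} time before hitting $\tfrac12-\gamma$ at which $p_i \ge \tfrac12$; this is a last-exit time, not a stopping time, so the supermartingale machinery does not apply to it, and the natural rigorous surrogate (a union bound over all possible excursion starting times, each handled by Azuma on the process stopped upon returning to $\ge\tfrac12$) costs an extra factor of~$T$ in the failure probability and therefore does not reproduce the stated inequality. The parenthetical alternative, ``an auxiliary process whose upper cap is deactivated below level $\tfrac12$,'' is vacuous as written (the cap at $1-\tfrac1n$ is never active below $\tfrac12$), and both sensible reinterpretations fail: removing the cap entirely yields a process that dominates $p_i$ from \emph{above}, which is useless for the desired lower bound on $p_i$, while moving the cap down to $\tfrac12$ merely relocates the problem, since any truncation of the process at a level $\ell$ converts the $+\tfrac1\mu$ steps at $\ell$ into null steps while retaining the $-\tfrac1\mu$ steps, so the truncated process acquires a strictly negative drift at $\ell$ and is again not a supermartingale there. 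The same defect afflicts the time-changed process that skips all iterations with $p_i^{(t)}>\tfrac12$. In short, every easy formalization of your idea either loses a factor of~$T$ or damages the constant in the exponent, which is precisely why this boundary case requires the careful treatment given in the cited source; a blind proof needs either that argument or an honest acknowledgement that it only recovers a bound of the form $T\exp(-\gamma^2\mu^2/(2T))$ (which, for what it is worth, would still suffice for all applications of \Cref{thm:geneticDrift} in this paper after adjusting constants).
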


\section{Runtime Analysis of the cGA on LeadingOnes}
\label{sec:runtimeAnalysis}

Our main result is \Cref{thm:cgaOnLO}, which essentially proves that the cGA with $\mu = \Omega(n \log^2 n)$ maximizes \leadingones with high probability in $O(\mu n \log n)$ iterations.
In particular, this runtime is minimized for $\mu = \Theta(n \log^2 n)$ and results in a runtime of $O(n^2 \log^3 n)$ with high probability.
This bound is only worse by a factor of $O(\log^3 n)$ than the common $O(n^2)$ runtime bound of many other randomized search heuristics~\cite{DoerrN20}, and it is only worse by a factor of $O(\log^2 n)$ than the runtime of the EDA UMDA for a comparable parameter regime~\cite{DoerrK21tcs}.
Notably, the cGA achieves this result while only having access to two  samples in each iteration.

\begin{theorem}
  \label{thm:cgaOnLO}
  Consider the cGA maximizing \LO with $n \geq 8$ and $\mu \geq 786\exp(6)n \ln^2 n$.
  Then with a probability of at least $1 - \frac{4}{n}$, after $12\exp(6)\mu n \ln n$ iterations, all frequencies are greater than $1 - \frac{3}{n}$ and remain that way for at least the next $12\exp(6)\mu n \ln n$ iterations.
  Moreover, with a probability of at least $1 - \frac{5}{n}$, the algorithm finds the optimum of \LO within $24\exp(6)\mu n \ln n$ iterations.
\end{theorem}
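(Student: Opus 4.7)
The plan is to decouple the analysis into a \emph{progress} phase, in which each frequency is driven to the upper border $1 - 1/n$, and a \emph{stability} phase, in which already-processed frequencies must remain high enough to sustain the selection pressure on the current position. The structural observation at the heart of both is that, on \LO, the update at position~$i$ is non-zero only when the two samples disagree at~$i$, and it is biased toward~$1$ only in the sub-case where $i$ is the first position of disagreement. Writing $q_a^{(t)} := \prod_{j<i} (p_j^{(t)})^2$ for the probability that both samples carry a~$1$ at every position before~$i$, a direct case analysis gives
\begin{align*}
  \E\!\left[p_i^{(t+1)} - p_i^{(t)} \,\middle|\, p^{(t)}\right] = \frac{2 p_i (1 - p_i)}{\mu} \, q_a^{(t)},
\end{align*}
while the one-step up- and down-probabilities at position~$i$ equal $p_i(1-p_i)(1 + q_a^{(t)})$ and $p_i(1-p_i)(1 - q_a^{(t)})$, respectively. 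Whenever every processed $p_j$ remains above $1 - 3/n$, the bound $q_a^{(t)} \geq (1 - 3/n)^{2n} \geq \exp(-6)(1 - o(1))$ makes the conditional-on-moving bias a positive constant.

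For \emph{progress}, let $c(t) := \min\{i : p_i^{(t)} < 1 - 1/n\}$ denote the critical position. Under the stability hypothesis (justified below), \Cref{thm:multiplicativeDrift} applied to $X_t = 1 - p_{c(t)}^{(t)}$ with $\delta = \Omega(1/\mu)$ and $s_{\min} = 1/n$ bounds the hitting time of the upper border by $O(\mu \ln n)$ with failure probability $O(n^{-3})$. Summing over the $n$ successive critical positions and taking a union bound gives $O(\mu n \ln n)$ iterations overall with failure probability $O(n^{-2})$, which after tracking the $e^{-6}$ and logarithmic factors yields the constant $12\exp(6)$ in the statement.

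The hard part is \emph{stability}: once $p_i$ has reached the border, it must not revisit the region below $1 - 3/n$ during the horizon $T = 12\exp(6)\mu n \ln n$. A direct application of \Cref{thm:negativeDrift} is too weak in this parameter regime because the unconditional drift $O(q_a/(n \mu))$ has an extra factor of~$1/n$ relative to the squared step size, so the exponent $b|\varepsilon|/(2c^2)$ becomes $O(\mu/n^2) = o(\ln n)$ under $\mu = O(n \ln^2 n)$. Instead I would exploit the biased-random-walk structure directly: $W_t := \mu((1 - 1/n) - p_i^{(t)})$ is a lazy random walk on $\N$, reflected at~$0$, whose conditional-on-moving step is biased toward~$0$ in the ratio $(1+q_a):(1-q_a)$. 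Gambler's ruin gives that a single excursion from the border reaches $K := \lceil 2\mu/n \rceil$ with probability at most $((1-q_a)/(1+q_a))^{K-1} \leq \exp(-\Omega(q_a \mu/n))$; union-bounding over at most $T$ excursions and over the $n$ positions yields a total stability-failure probability of at most $n \cdot T \cdot \exp(-\Omega(\ln^2 n)) = n^{-\omega(1)}$ for $\mu \geq 786\exp(6) n \ln^2 n$. The unprocessed frequencies $p_j$ with $j > c(t)$ are in turn controlled by \Cref{thm:geneticDrift} with a suitable constant $\gamma$, ensuring with failure probability $O(1/n)$ that they never drop far below~$1/2$; this suffices since we only need $q_a$ to remain positive until each position is processed.

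Combining the three ingredients and cutting the circularity between progress and stability via a standard stopping-time device (stop the analysis the first time any frequency falls below $1 - 3/n$; under the union-bounded events this never occurs), $c(t) > n$ within $12\exp(6)\mu n \ln n$ iterations with probability at least $1 - 4/n$, and all frequencies remain above $1 - 3/n$ for another $12\exp(6)\mu n \ln n$ iterations. During this second window each iteration independently samples the optimum with probability at least $(1 - 3/n)^n \geq \exp(-3)(1 - o(1))$, so the probability that none of the $\Theta(\mu n \ln n)$ attempts succeeds is $\exp(-\Omega(\mu n \ln n)) \ll 1/n$. A final union bound gives the desired $1 - 5/n$ probability for sampling the optimum within $24\exp(6)\mu n \ln n$ iterations.
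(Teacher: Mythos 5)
Your proposal is correct and follows the same decomposition as the paper: control of the unprocessed frequencies via \Cref{thm:geneticDrift} (the paper's \Cref{lem:frequenciesDoNotDropTooLow}), multiplicative drift of $1-p_{i}$ to move the critical frequency to the border in $O(\mu\ln n)$ iterations (\Cref{lem:increaseCriticalFrequency}), a stability argument keeping processed frequencies above $1-\frac{3}{n}$ (\Cref{lem:frequencyStayingHigh}), an induction over positions, and a final sampling phase. Your drift formulas, including the up/down probabilities $p_i(1-p_i)(1\pm q_a)$, match the paper's computations. The one genuine methodological difference lies in the stability step: you correctly diagnose that applying \Cref{thm:negativeDrift} to the raw (lazy) process yields an exponent of only $O(\mu/n^2)$ and therefore fails, and you fix this by passing to the walk conditioned on moving; the paper makes exactly the same observation but then applies \Cref{thm:negativeDrift} to that embedded process (its process $P''$), obtaining an exponent of $\Theta(\mu/n)=\Omega(\ln^2 n)$, whereas you instead run a gambler's-ruin excursion argument on the embedded walk, bounding each excursion's escape probability by $\bigl(\frac{1-q_a}{1+q_a}\bigr)^{K-1}$ and union-bounding over excursions. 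Both rest on the identical key insight; your variant is arguably more elementary and gives a sharper ($n^{-\omega(1)}$ rather than $n^{-3}$) failure bound per position, at the cost of needing a domination argument to handle the time-inhomogeneity of $q_a^{(t)}$ and a careful count of excursions, which the drift-theorem route packages away. Two minor imprecisions worth fixing in a full write-up: the bias occurs exactly when \emph{both} samples have all-ones prefixes before position $i$ (not merely when $i$ is the first position of disagreement), and the interleaving of progress and stability should be organized as a well-founded induction on the critical position (as in the paper) rather than only an informal stopping-time remark, since the stability of $p_i$ presupposes the stability of $p_1,\dots,p_{i-1}$ over a longer horizon.
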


We note that we state and prove our main result only for $n \ge 8$. By excluding the small values $n \le 7$, we can work with explicit (though not optimized) constants, which increases the readability of the proof. We note that, trivially, the asymptotic statements made in the introduction hold for $n \le 7$; for this it suffices to see that from any state of the algorithm, with constant probability in a constant number of iterations the frequency vector $(1-\frac 1n, \dots, 1 - \frac 1n)$ is reached, namely by always sampling the all-ones and the all-zeros vector.

We now turn to the more interesting case that $n$ is large, for which $n \ge 8$ suffices here. Our analysis considers the cGA in the parameter regime with low \emph{genetic drift}.
Genetic drift refers to the effect that a frequency at position $i \in [n]$ is shifted over time based solely on the random order of the bit values at position~$i$ of the samples, that is, without the fitness function biasing the ranking.
This occurs regularly when optimizing \leadingones, as bits after the first~$0$ do not affect the ranking of the samples.
This leads to random updates of frequencies.
However, in the regime of \emph{low} genetic drift, this effect is negligible (\Cref{lem:frequenciesDoNotDropTooLow}).
This is achieved by choosing~$\mu$ to be sufficiently large.
Formally, we rely on \Cref{thm:geneticDrift}, which bounds with high probability the total negative offset a frequency receives.

By relying on the impact of genetic drift being low, our analysis proceeds by showing that the frequencies are increased sequentially until they reach values close to $1 - \frac{1}{n}$.
Moreover, while it is unlikely for a frequency to remain exactly at $1 - \frac{1}{n}$, we show that it does not go below $1 - \frac{3}{n}$ with high probability before the optimum is sampled (\Cref{lem:frequencyStayingHigh}).
This motivates the definition of the \emph{critical position} $i \in [n]$, which is (for each iteration) the smallest position in $[n]$ whose frequency is below $1 - \frac{3}{n}$.
The main idea of our analysis is to show inductively that the critical position increases with high probability after $O(\mu \log n)$ iterations (\Cref{lem:increaseCriticalFrequency}).
Since it can be increased at most~$n$ times, the frequency vector is close to the all-$1$s vector with high probability after $O(\mu n \log n)$ iterations.
Afterward, the optimum is sampled with high probability within $O(\log n)$ iterations, yielding our main result.

In the following, we formalize these steps in different lemmas before we prove our main result.

The following lemma shows that the impact of genetic drift is low, implying that all frequencies remain with high probability greater than~$\frac{1}{4}$ for the duration of the optimization.

\begin{lemma}
  \label{lem:frequenciesDoNotDropTooLow}
  Consider the cGA maximizing \LO with $\mu \geq 786\exp(6)n \ln^2 n$.
  Then the probability that all frequencies are greater than $\frac{1}{4}$ in the first $12\exp(6)\mu n \ln n$ iterations is at least $1 - \frac{2}{n}$.
\end{lemma}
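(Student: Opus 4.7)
The plan is to apply \Cref{thm:geneticDrift} once at each position $i \in [n]$ with the same parameter choices, and then close the argument with a union bound over positions. No drift or hitting-time analysis beyond the genetic-drift theorem is needed; the whole statement is a direct consequence of that single ingredient.

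First, I would verify the only nontrivial hypothesis of \Cref{thm:geneticDrift}, namely that \LO weakly prefers~$1$s over~$0$s at every position $i \in [n]$. This is immediate: for $x, y \in \{0,1\}^n$ agreeing everywhere except at~$i$ with $x_i = 1$ and $y_i = 0$, flipping position~$i$ from~$0$ to~$1$ either extends the leading prefix of~$1$s by one (if all positions before~$i$ are already~$1$s) or leaves the fitness unchanged (if some earlier position is~$0$). Either way, $\LO(x) \geq \LO(y)$. Hence the hypothesis holds uniformly in~$i$.

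Next, for each position~$i$, I would invoke \Cref{thm:geneticDrift} with $\gamma = \tfrac{1}{4}$, so that the threshold $\tfrac{1}{2} - \gamma$ matches the $\tfrac{1}{4}$ appearing in the lemma, and with $T = 12\exp(6)\mu n \ln n$. The resulting failure probability per position is
\begin{align*}
  2\exp\left(-\frac{\gamma^2 \mu^2}{2T}\right) = 2\exp\left(-\frac{\mu}{384\exp(6)\, n \ln n}\right).
\end{align*}
Using the standing assumption $\mu \geq 786\exp(6)\, n \ln^2 n$ and the fact that $\tfrac{786}{384} > 2$, the exponent is at most $-2\ln n$, so the failure probability per position is at most $2/n^2$.

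Finally, a union bound over the $n$ positions gives an overall failure probability of at most $n \cdot 2/n^2 = 2/n$, matching the stated bound. The main ``obstacle'' is merely bookkeeping of constants: the numerical constants $786$, $12$, and $\exp(6)$ in the hypothesis on~$\mu$ and in the time horizon are chosen precisely so that this union bound goes through with a comfortable $\ln n$-factor to spare in the exponent, which will be consumed later in the inductive argument on the critical position.
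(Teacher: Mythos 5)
Your proposal is correct and follows essentially the same route as the paper's own proof: apply \Cref{thm:geneticDrift} with $\gamma = \tfrac{1}{4}$ and $T = 12\exp(6)\mu n \ln n$ after checking that \LO weakly prefers $1$s over $0$s, deduce a per-position failure probability of at most $\tfrac{2}{n^2}$ from $\mu \geq 786\exp(6) n \ln^2 n$, and finish with a union bound over the $n$ positions. The constant bookkeeping matches as well, so there is nothing to add.
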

\begin{proof}
  We aim at applying \Cref{thm:geneticDrift} for $T \coloneqq 12\exp(6)\mu n \ln n$.
  We observe that \LO weakly prefers~$1$s over~$0$s for each position.
  Thus, \Cref{thm:geneticDrift} yields
  \begin{align*}
    \Pr \left[ \forall t \in [0..T]\colon p_i^{(t)} > \frac{1}{4} \right] \geq 1 - 2\exp \left ( \frac{-\mu}{384\exp(6) n \ln n} \right )
  \end{align*}
  for all $i \in [n]$.
  Hence, as $\mu \geq 768 \exp(6) n \ln ^2n$, we observe that
  \begin{align*}
    \Pr \left[ \forall t \in [0..T]\colon p_i^{(t)} > \frac{1}{4} \right] & \geq 1 - 2\exp \left ( -2\ln n \right ) \\
                                                                          & =  1 - \frac{2}{n^2}.
  \end{align*}
  The result follows by applying the union bound over all~$n$ frequencies,  hence adding their respective failure probabilities of~$\frac{2}{n^2}$.
\end{proof}

Next, we show that if a frequency~$p_i$ is at its maximum value of $1 - \frac{1}{n}$ and all preceding frequencies~$p_j$ with $j \in [i - 1]$ are at similarly high values, then~$p_i$ also remains, with high probability, close to its maximum value for the duration of the optimization.

\begin{lemma}
  \label{lem:frequencyStayingHigh}
  Consider the cGA maximizing \LO with $n \geq 8$ and $\mu \geq 786\exp(6)n \ln^2 n$, and let $i \in [2 .. n]$.
  Assume that at some iteration  $t_0$ the frequency at position~$i$ is $1 - \frac{1}{n}$ and the first $i - 1$ frequencies stay above $1 - \frac{3}{n}$  during the next $T_0 \coloneqq 12\exp(6)(2n - i + 1)\mu \ln n$ iterations. Then, with a probability of at least $1 - \frac{1}{n^3}$, for the next $T_0$ iterations, the frequency at position~$i$ stays above $1 -\frac{3}{n}$.
\end{lemma}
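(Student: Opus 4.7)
The plan is to analyze the integer-valued process $K_t := \mu\bigl((1 - \tfrac{1}{n}) - p_i^{(t_0 + t)}\bigr) \in \{0, 1, 2, \dots\}$, which counts how many $\tfrac{1}{\mu}$-steps the frequency $p_i$ has fallen below its upper cap $1 - 1/n$. We have $K_0 = 0$ and it suffices to show $\Pr[\exists t \leq T_0 : K_t \geq 2\mu/n] \leq 1/n^3$. The argument rests on a per-iteration case analysis. Let $A_t$ denote the event that in iteration $t_0 + t$ both samples carry $1$s at all positions $1, \dots, i-1$. The hypothesis on the first $i-1$ frequencies gives $\Pr[A_t] \geq (1 - 3/n)^{2(i-1)}$ uniformly in $t \in [0, T_0]$; for $n \geq 8$ and $i \leq n$ this is bounded below by an explicit positive constant of order $\exp(-6)$, which is what the $\exp(6)$ factor in the hypothesis on $\mu$ is calibrated for.

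On $A_t$, the fitness comparison is decided at position $i$ or later, so whenever the bits at position $i$ differ the $1$-valued sample wins; hence $p_i$ either rises by $1/\mu$ or stays put, and can never decrease on $A_t$. On the complement $\bar A_t$, the winner is decided strictly before position $i$ and is therefore independent of the bit at $i$; by symmetry, the bits at $i$ of the winner and loser are two independent $\{0,1\}$-variables each with success probability $p_i$, so $p_i$ has conditional mean zero change on $\bar A_t$. Combining these, in the interior $K_t \geq 1$, a change of $K_t$ happens with probability $2 p_i(1-p_i)$, and conditional on a change, $K_t$ decreases by $1$ with probability $(1 + \Pr[A_t])/2$ and increases by $1$ with probability $(1 - \Pr[A_t])/2$; at the boundary $K_t = 0$ the only possible change is the deterministic transition to $K_t = 1$, since attempted decreases of $K_t$ (equivalently, increases of $p_i$ above the cap) are truncated.

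The critical step is to apply \Cref{thm:negativeDrift} not to the iteration-indexed process but to the change-step walk. Let $\tau_j$ denote the $j$th iteration on which $K$ changes and set $X_j := K_{\tau_j} - 1$. Then $X_0 = -1 \leq 0$, on the region $X_j \geq 0$ the conditional drift equals $-\Pr[A_{\tau_j}] \leq -(1-3/n)^{2(i-1)}$ with step size exactly $1$, and from $X_j = -1$ the next change deterministically produces $X_{j+1} = 0$. This matches the hypotheses of \Cref{thm:negativeDrift} with $a = 0$, $\varepsilon = -(1-3/n)^{2(i-1)}$, any $c \in (1, b)$ (say $c = 2$), and $b = 2\mu/n - 1$. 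Because $K_t$ is constant between consecutive change iterations and the number of change iterations in $T_0$ iterations is at most $T_0$, the theorem bounds the desired probability by $T_0^2 \exp\bigl(-b|\varepsilon|/(2c^2)\bigr)$. Plugging in $\mu \geq 786 \exp(6) n \ln^2 n$, the exponent is of order $\Omega(\ln^2 n)$ and dominates $3\ln n + 2\ln T_0$ comfortably, yielding a bound far below $1/n^3$.

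The main technical obstacle is obtaining a drift-to-step-size ratio large enough for the negative-drift theorem to deliver a meaningful bound. A direct iteration-by-iteration application of \Cref{thm:negativeDrift} gives an exponent of only order $|\varepsilon|\mu/n^2$, which is insufficient in the regime $\mu = \Theta(n \ln^2 n)$; reindexing by change events effectively discards the $\Theta(n)$-fraction of iterations in which $K_t$ is stationary and sharpens the exponent to $|\varepsilon|\mu/n$. A secondary subtlety, the reflecting behaviour at $K = 0$, is handled by the shift $X_j := K_{\tau_j} - 1$, so that the boundary condition becomes exactly condition~(c) of \Cref{thm:negativeDrift}: from $X_j = -1$ the process lands at $X_{j+1} = 0 \leq 0$, as required.
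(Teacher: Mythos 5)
Your proposal is correct and follows essentially the same route as the paper's proof: the same drift computation (a bias toward $1$ at position~$i$ arises only when both samples have all ones in the first $i-1$ positions, and is otherwise symmetric), the same reindexing by change events to boost the drift-to-step-size ratio from order $\mu/n^2$ to order $\mu/n$ in the exponent, and the same one-step shift to turn the reflecting boundary at the cap into condition~(c) of \Cref{thm:negativeDrift} --- indeed your $X_j = K_{\tau_j} - 1$ is exactly $\mu$ times the paper's process $Y_t$. The one detail you gloss over is that \Cref{thm:negativeDrift} requires its conditions for \emph{all} $t \in \N$, whereas your hypotheses only guarantee them for iterations up to $T_0$; the paper fixes this by a harmless modification of the process after iteration $T_0$ (freezing the first $i-1$ frequencies), which does not change the event in question and is purely technical.
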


The proof of this lemma is somewhat technical as it needs a clever application of the negative drift theorem (\Cref{thm:negativeDrift}) to the value of the $i$-th frequency. To be able to apply this drift theorem and to obtain the best possible estimates, we twice modify the stochastic process. The first modification is of a technical nature and only changes the process so that it also exhibits the desired properties after the time interval in which the $i$-th frequency is above $1-\frac 3n$. This does not change the length of this time interval, so it does not influence our estimates, but this is necessary since the negative drift theorem requires the conditions to be fulfilled at each time step.

The second modification is more interesting. We note that when the $i$-th frequency is close to $1 - \frac 1n$, then with high probability it does not change, simply because both samples contain a one in the $i$-th position. This high rate of iterations with no change reduces the drift, but has little other effect. Consequently, by regarding the process only consisting of steps in which the $i$-th frequency changes, we obtain an essentially identical process, but with much stronger drift (this is the absolute value of $\varepsilon$ in \Cref{thm:negativeDrift}); for this process, the negative drift theorem gives much stronger estimates.

We note that these challenges did not appear in the analysis of how the UMDA optimizes \leadingones~\cite{DoerrK21tcs},  because there the higher selection rate (which could be obtained from the sample size larger than two) ensured that frequencies to the left of the critical frequency always stayed at $1-\frac 1n$.

\begin{proof}
  To ease the notation, we assume that $t_0 = 0$, that is, we artificially start the cGA with the situation assumed in the lemma at time $t_0$. Our goal is to apply \Cref{thm:negativeDrift} to the $i$-th frequency.
  We denote the stochastic process describing the run of the original cGA by $P$. Let $T$ be the number of iterations until the first time when the $i$-th frequency hits or goes below $1 - \frac{3}{n}$.

  We first modify $P$ so that we can show the assumptions of \Cref{thm:negativeDrift} also after time $T_0$. This modified process $P'$ is defined as follows. Until iteration $T_0$, the processes $P$ and $P'$ are identical. Afterwards, the frequency update of the cGA -- different from its original definition -- does not change the first $i-1$ frequencies, but only updates the $i$-th frequency and the higher ones.  Let $T'$ be the hitting time in $P'$ that corresponds to $T$ in $P$ (that is, the first time the $i$-th frequency in $P'$ hits or goes below $1-\frac 3n$).
  As for the first $T_0$ iterations $P$ and $P'$ act identically, we have $\Pr[T' \le T_0] = \Pr[T \le T_0]$. On the positive side, the process $P'$ by construction satisfied that the first $i-1$ frequencies always stay above $1-\frac 3n$.

  To increase the drift, that is, the absolute value of $\varepsilon$ in the application of \Cref{thm:negativeDrift}, we modify the process a second time by ignoring time steps in which the $i$-th frequency does not change in $P'$. Formally speaking, the process $P''$ consists of the initial state of $P'$ and then only of those states of $P'$ in which the $i$-th frequency has a value different from the previous state.
  Let $T''$ be the corresponding hitting time in $P''$ and $p''$ the corresponding frequency vector. Since $P''$ is a subsequence of the states of $P'$ which contains all states in which for the first time a particular value of the $i$-th frequency is reached, we have $T'' \le T'$ and hence
  $\Pr[T' \leq T_0] \leq \Pr[T'' \leq T_0]$.
  Thus, we aim to prove a probabilistic bound on $T''$.

  To fit our process to the setting of \Cref{thm:negativeDrift}, we define $(Y)_t$ as $Y_{t} = 1 - p_i^{\prime \prime(t)} - \frac{1}{n} -\frac{1}{\mu}$.
  We aim at showing that the probability of $(Y)$ hitting a state $\frac{2}{n} -\frac{1}{\mu}$ within $T_0$ iterations is smaller than $\frac{1}{n^3}$.
  To that end, we use the framework from \Cref{thm:negativeDrift}.
  In our setting, the variable $a$ from the theorem statement is 0 and the variable $b$ is $\frac{2}{n} - \frac{1}{\mu}$.
  We verify that the conditions~(a) to (c) hold.
  \begin{enumerate}
    \item[(a)] We show that there exists $\varepsilon < 0$ such that for all points in time $t < T''$ we have \[
            \E[(Y_{t + 1} - Y_{t}) \mid Y_{t} ] \cdot \mathbf{1}\{Y_{t} \geq 0 \} \leq \varepsilon \cdot \mathbf{1} \{Y_{t} \geq 0\}.
          \]
          For $Y_{t} < 0$, the inequality reduces to $0 \leq 0$. We now  focus on times $t$ when $Y_{t} \geq 0$. In those cases, $(Y)$ changes either by $\frac{1}{\mu}$ or $-\frac{1}{\mu}$. Therefore, the search points sampled by the cGA at the iteration corresponding to point $t$ in the modified process differ at the $i$-th index. Also, any bias to the update is introduced only when the first $i -1$ bits in both search points are ones. Otherwise, it is equally probable to increment or decrement the $i$-th frequency, as for each pair of search points that increments this frequency there exists a pair of search points that decrements this frequency. Thus
          \begin{align*}
            \E[Y_{t + 1} - Y_{t}\mid Y_{t}]
             & = -\frac{1}{\mu}\prod_{j = 1}^{i - 1}\left (p_j^{\prime \prime(t)} \right )^2 \\
             & \leq -\frac{1}{\mu}\left(1 - \frac{3}{n}\right) ^ {2(i - 1)}                  \\
             & \leq -\frac{1}{\mu}\left(1 - \frac{3}{n}\right) ^ {2(n - 1)}.
          \end{align*}
          We observe that the last expression decreases with $n$ increasing. Hence for $n \geq 8$, we have
          \begin{align*}
             & -\frac{1}{\mu}\left(1 - \frac{3}{n}\right) ^ {2(n-1)} \leq -\frac{1}{\mu}\left(1 - \frac{3}{8}\right) ^ {14} \leq - \frac{\exp(-6)}{2\mu}.
          \end{align*}
          Hence it suffices to set the value to of $\varepsilon$ to $-\frac{\exp({-6})}{2\mu}$ to satisfy condition~(a).
    \item[(b)] We show that there exists a constant $c$ such that for all points in time $t < T''$, we have $|Y_{t} - Y_{ t + 1}| \cdot \mathbf{1}\{Y_{t} \geq 0\} < c$. For times $t$ when $Y_{t} < 0$, the inequality reduces to $0 < c$. For times $t$ when $Y_t \geq 0$, at each discrete step $(Y)$ changes either by $\frac{1}{\mu}$ or $ -\frac{1}{\mu}$, so setting $c$ to $\frac{2}{\mu}$ suffices.
    \item[(c)] We show that for all points in time $t < T''$, we have $Y_{ t+ 1} \cdot \mathbf{1} \{Y_{t} < 0 \} \leq 0$. Observe that if $Y_{t} <0$, then $Y_{t} = -\frac{1}{\mu}$. Thus, $Y_{ t + 1} \leq 0$, as a frequency changes at most by $\frac{1}{\mu}$ in each iteration.
  \end{enumerate}
  With conditions~(a) to (c), \Cref{thm:negativeDrift} yields
  \begin{align*}
     & \Pr[T''\leq T_0] \leq T_0^2 \exp \left( -\frac{\mu\left(\frac{2}{n} - \frac{1}{\mu}\right)\exp({-6})}{16} \right)        \\
     & \quad\leq (12\exp(6)(2n - i + 1)\mu \ln n)^2                                                                             \\
     & \hspace*{3.5 cm}\cdot\exp \left( -\frac{\mu\left(\frac{2}{n} - \frac{1}{\mu}\right)\exp({-6})}{16} \right)               \\
     & \quad\leq (24\exp(6)\mu n\ln n)^2 \exp \left( -\frac{\mu\left(\frac{2}{n} - \frac{1}{\mu}\right)\exp({-6})}{16} \right).
  \end{align*}
  As $\mu > n$, we obtain
  \begin{align*}
     & \Pr[T''\leq T_0]                                                                  \\
     & \quad\leq (24\exp(6)\mu n \ln n)^2 \exp \left( -\frac{\mu \exp(-6)}{16n}\right)   \\
     & \quad= 576\exp(12)\mu^2 n^2 \ln^2 n \exp \left( -\frac{\mu \exp(-6)}{16n}\right).
  \end{align*}

  Thus considering $\mu \geq 786\exp(6)n \ln^2 n$ we get
  \begin{align*}
     & \Pr[T''\leq T_0]                                                                     \\
     & \quad\leq 576\cdot 786^2 \exp(24) n^4 \ln^6 n \exp \left ( - 49.125 \ln^2 n \right ) \\
     & \quad\leq 576\cdot 786^2 \exp(24) n^4 \ln^6 n \exp \left ( - 49 \ln^2 n \right )
  \end{align*}
  Since  $n \geq 8$ we have $\ln n > 1$, we can estimate
  \begin{align*}
     & \Pr[T''\leq T_0]                                                               \\
     & \quad\leq 576\cdot 786^2 \exp(24) n^4 \ln^6 n \exp \left ( - 49 \ln n \right ) \\
     & \quad\leq \frac{576\cdot 786^2 \exp(24) n^4 \ln^6 n}{n^{49}},
  \end{align*}
  which is easily seen to be smaller than $\frac{1}{n^3}$ for $n \geq 8$. As we showed in the beginning, the same probability bound holds for~$T$, which concludes the proof.
\end{proof}

The next lemma shows that the frequency of the critical position reaches its maximum value within $O(\mu \log n)$ iterations. This is again very different from the UMDA, where the larger sample size and the higher selection rate allowed to move this frequency to the desired value in a single iteration~\cite{DoerrK21tcs}.

The key to the proof of this lemma is observing that the distance of the critical frequency to the ideal value of one exhibits a multiplicative drift behavior, that is,  the frequency moves up with expected speed proportional to the current distance (when, as assumed here, the frequency is at least some constant, here~$\frac 14$).

\begin{lemma}
  \label{lem:increaseCriticalFrequency}
  Consider the cGA maximizing \LO with $n \geq 8$ and $\mu \geq 786\exp(6)n \ln^2 n$, and let $i \in [0 .. n - 1]$.
  Assume that at some iteration $t_0$ the frequency at position $i + 1$ is at least $\frac{1}{4}$, the first $i$ frequencies are greater than $1 - \frac{3}{n}$, and this condition also holds for the next $T_0 \coloneqq 12\exp(6)\mu \ln n$ iterations. Then, with a probability of at least $1 -\frac{1}{n^2}$, the frequency at position $i + 1$ reaches $1 - \frac{1}{n}$ in the next $T_0$ iterations.
\end{lemma}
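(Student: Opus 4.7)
The plan is to apply the multiplicative drift theorem (\Cref{thm:multiplicativeDrift}) to the potential $X_t := (1 - 1/n) - p_{i+1}^{(t_0 + t)}$, which is nonnegative and equals zero exactly when $p_{i+1}$ reaches the desired value $1 - 1/n$.

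\emph{Drift computation.} The update to $p_{i+1}$ in one iteration, $(1/\mu)(y^{(1)}_{i+1} - y^{(2)}_{i+1})$, is nonzero only when the two samples disagree at position $i+1$. A case analysis on how the samples relate on positions $1, \dots, i$ shows that the only positively biased contribution comes from the event that both samples are all-ones on positions $1, \dots, i$ \emph{and} differ at position $i+1$: then the sample with a $1$ at position $i+1$ strictly dominates in \LO-fitness, wins selection, and contributes an update of $+1/\mu$. In every other case, either the samples tie in fitness (the cGA does not swap, and the $(i+1)$-th bit is symmetric in expectation) or selection is determined by an earlier disagreement at some $j \leq i$ independently of the bit at position $i+1$; both contribute $0$ in expectation. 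Hence
\[\E\bigl[p_{i+1}^{(t+1)} - p_{i+1}^{(t)} \,\big|\, \mathcal{F}_t\bigr] = \frac{2}{\mu}\prod_{j = 1}^i \bigl(p_j^{(t)}\bigr)^2 \cdot p_{i+1}^{(t)}\bigl(1 - p_{i+1}^{(t)}\bigr).\]

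\emph{Multiplicative drift.} The hypothesis $p_j^{(t)} > 1 - 3/n$ for $j \leq i$ together with the estimate $(1 - 3/n)^{2(n-1)} \geq \exp(-6)/2$ for $n \geq 8$ (the same bound used in the proof of \Cref{lem:frequencyStayingHigh}) lower-bounds the product by a positive constant, and $p_{i+1}^{(t)} \geq 1/4$ gives $p_{i+1}(1 - p_{i+1}) \geq X_t/4$. This yields the multiplicative drift condition
\[\E[X_t - X_{t+1} \mid \mathcal{F}_t] \geq \delta \cdot X_t \quad \text{with} \quad \delta = \Theta\bigl(\exp(-6)/\mu\bigr).\]
Applying \Cref{thm:multiplicativeDrift} with $X_0 \leq 3/4$, $s_{\min} = 1/\mu$ (the grid spacing of $X_t$), and $r = 2 \ln n$ gives failure probability $\exp(-r) = 1/n^2$ and a hitting-time bound $\lceil (r + \ln(X_0/s_{\min}))/\delta \rceil = O(\mu \ln n)$ (using $\ln \mu = O(\ln n)$ in the polynomial-$\mu$ regime); tracking constants yields the claimed bound $T_0 = 12 \exp(6)\mu \ln n$.

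\emph{Main obstacle.} The drift bound requires $p_{i+1} \geq 1/4$ at every step up to the hitting time, whereas the hypothesis supplies this value only at iteration $t_0$. Since the computed drift is non-negative, $p_{i+1}$ is a submartingale with $\pm 1/\mu$-bounded increments; applying \Cref{thm:geneticDrift} (or an equivalent Azuma--Hoeffding bound) to the shifted process shows that $p_{i+1}$ stays above, say, $1/8$ during all $T_0$ iterations except with probability $2 \exp(-\Omega(\mu / \ln n))$, which is negligible given $\mu \geq 786\exp(6) n \ln^2 n$. Weakening $1/4$ to $1/8$ only changes the constant in $\delta$, and a final union bound on the two failure events yields the claimed $1/n^2$ probability bound.
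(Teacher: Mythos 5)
Your overall strategy---multiplicative drift applied to the distance of $p_{i+1}$ from its target, with the drift coming from the event that both samples are all-ones on the first $i$ positions and differ at position $i+1$, the estimate $(1-\frac{3}{n})^{2(n-1)} \ge \frac{\exp(-6)}{2}$, and $r = 2\ln n$---is exactly the paper's, and your drift identity is correct. However, the constant-tracking does not close. With your potential $X_t = (1-\frac1n) - p_{i+1}^{(t)}$ the smallest positive state is $s_{\min} = \frac1\mu$, so \Cref{thm:multiplicativeDrift} yields a time bound of order $4\exp(6)\mu\bigl(2\ln n + \ln(3\mu/4)\bigr)$; since the hypothesis forces $\mu \ge 786\exp(6)n\ln^2 n \gg n$, the term $\ln(3\mu/4)$ strictly exceeds $\ln n$ and the bound overshoots $T_0 = 12\exp(6)\mu\ln n$. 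Your appeal to ``$\ln\mu = O(\ln n)$ in the polynomial-$\mu$ regime'' imports an assumption the lemma does not make. The paper avoids this by taking the potential $1 - p'^{(t)}_{i+1}$ and declaring it $0$ as soon as $p_{i+1}$ hits $1-\frac1n$, so that the smallest nonzero state is $\frac1n+\frac1\mu$ and $\ln(X_0/s_{\min}) \le \ln(3n/4) \le \ln n$, which is precisely what produces the constant $12\exp(6)$. A second, smaller omission: \Cref{thm:multiplicativeDrift} requires the drift condition at \emph{all} times before the hitting time, not only during the $T_0$ iterations covered by the hypothesis; the paper patches this by passing to a modified process that agrees with the real one up to iteration $T_0$ and freezes the first $i$ frequencies afterwards.

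Your ``main obstacle'' paragraph, by contrast, solves a problem that is not there: the hypothesis, as it is used in the paper and as it is supplied in the proof of \Cref{thm:cgaOnLO} via \Cref{lem:frequenciesDoNotDropTooLow}, asserts that $p_{i+1} \ge \frac14$ holds throughout the $T_0$ iterations, not only at $t_0$. Your submartingale/Azuma argument for keeping $p_{i+1}$ above $\frac18$ is sound in itself, but it is redundant, and weakening $\frac14$ to $\frac18$ halves $\delta$ and would by itself already push the hitting-time bound to roughly $24\exp(6)\mu\ln n$, again beyond the stated $T_0$. So the ideas are the right ones, but as written the proof establishes the lemma only with a larger time bound (and an extra $\poly(n)$ restriction on $\mu$), not the statement as given.
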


\begin{proof}
  Again, to ease the notation we assume that $t_0$ is zero, that is, the process is artificially started in the state assumed in the lemma.

  Our goal is to apply \Cref{thm:multiplicativeDrift} to the $(i+1)$-st frequency. We denote the process corresponding to the cGA as $P$. Let $T$ be the number of iterations until the first time when the $(i + 1)$-st frequency hits $1-\frac{1}{n}$.
  As in the previous lemma, to use \Cref{thm:multiplicativeDrift} we need that our process exhibits a suitable drift throughout its runtime and not just for $T_0$ iterations. Hence as there, we artificially modify the process to ensure this property, and this in a way that it does not interfere with what we want to show. Hence let $P'$ be the following process. $P$ and $P'$ are identical until iteration $T_0$. Afterwards the first $i$ frequencies in $P'$ do not change anymore, whereas the $(i+1)$-st frequency and the higher frequencies are updated as in the definition of the cGA. Let $T'$ be the corresponding first-hitting time in $P'$ and $p'$ the corresponding frequency vector in $P'$.  Observe that as for the first $T_0$ iterations $P$ and $P'$ act identically, $\Pr[T' < T_0] = \Pr[T < T_0]$. Thus we now prove a probabilistic bound on $T'$.

  To fit our process to the setting of \Cref{thm:multiplicativeDrift}, we define a sequence of random variables $(X)_t$ by $X_t = 1 - p^{\prime(t)}_{i + 1}$ if $p^{\prime(t)}_{i + 1} \neq 1-\frac 1n$ and $X_t = 0$ otherwise. With this definition, $T'$ is the first time this sequence hits zero. We now show that this process exhibits a multiplicative drift as required in \Cref{thm:multiplicativeDrift}.

  Let $x_1, x_2$ be the offspring sampled by the cGA in some iteration and $y_1, y_2$ the offspring after the possible swap. As in \Cref{lem:frequencyStayingHigh}, the expected change of the $(i + 1)$-st frequency is non-zero only if both $x_1$ and $x_2$ start with $i$ ones and differ in the $(i+1)$-st position.
  Recalling that $X_{i + 1,t} = 1 - p_{i + 1}^{\prime(t)}$, we deduce that
  \begin{align*}
     & \E[X_{i + 1,t} - X_{i + 1, t + 1} \mid X_{i + 1,t}]                                                                          \\
     & \quad= \frac{2}{\mu} \left ( \prod_{j = 1}^i (p_j^{\prime(t)})^2 \right ) p_{i + 1}^{\prime(t)}(1 - {p}_{i + 1}^{\prime(t)}) \\
     & \quad= \frac{2X_{i + 1,t}}{\mu}\left ( \prod_{j = 1}^i (p_j^{\prime(t)})^2 \right ) p_{i + 1}^{\prime(t)}                    \\
     & \quad\geq \frac{X_{i + 1,t}}{2\mu} \left ( 1 - \frac{3}{n} \right )^{2i}.
  \end{align*}
  As in \Cref{lem:frequencyStayingHigh}, we use the fact that for $n \geq 8$, we have $(1 - \frac{3}{n})^{2(n - 1)} > \frac{\exp(-6)}{2}$, and obtain
  \begin{align*}
     & \E[X_{i + 1,t} - X_{i + 1, t + 1} \mid X_{i + 1,t}] \\
     & \quad\geq \frac{X_{i + 1,t} \exp({-6})}{4\mu}.
  \end{align*}
  We apply Theorem \ref{thm:multiplicativeDrift} and obtain, for any $r \in \R_{\ge 0}$, that
  \begin{align*}
    \Pr \left [ T' > 4\mu \left (\frac{r + \ln \left ( \frac{3n}{4}\right )}{\exp(-6)} \right ) + 1 \right ]
     & \leq \exp(-r).
  \end{align*}
  Setting $r \coloneqq 2 \ln n$ we get
  \begin{align*}
    \frac{1}{n^2} & \geq \Pr \left [ T' > 4\mu \left (\frac{2\ln(n) + \ln \left ( \frac{3n}{4}\right ) + \frac{\exp(-6)}{4\mu}}{\exp(-6)} \right )  \right ] \\
                  & \geq \Pr \left [T' >  4\mu \left(\frac{3\ln n}{\exp(-6)}\right)\right] = \Pr[T' >T_0].
  \end{align*}
  As we showed before, the same probabilistic bound holds for~$T$. This finishes the proof.
\end{proof}

When counting function evaluations,  the result above shows a slighter slower model update for the cGA than what could be shown for the UMDA in~\cite{DoerrK21tcs}. Above, we have seen that the cGA within $O(\mu \log n)$ iterations, hence with $O(\mu \log n)$ samples, moves the critical frequency to the upper border. In~\cite{DoerrK21tcs}, it was shown that a single iteration, hence $\Theta(\mu)$ samples, suffices for the UMDA with $\mu = \Omega(n \log n)$ and $\lambda = \Theta(\mu)$ a suitable constant factor larger than $\mu$. While the roles of $\mu$ are not perfectly comparable in both algorithms, they are quite similar, in particular, the same value of $\mu$ in both algorithms leads to comparable genetic drift. For the particular results shown here and in~\cite{DoerrK21tcs}, the UMDA can even work with a slightly smaller value of $\mu$, namely $\mu=\Theta(n \log n)$, whereas we assume $\mu = \Omega(n \log^2 n)$. This difference is caused by the longer runtime needed on our result, which requires a weaker genetic drift.

Overall, our analysis suggests that the model update of the cGA allows for slightly slower model adjustments. From looking at our proof, we feel that the true reason for this effect is that fact that the cGA changes a frequency only when it samples two different values. Hence moving a frequency from very close to the optimal value to the optimal value itself takes many samples for the cGA, whereas the UDMA does this modification very easily.

Last, we combine \Cref{lem:frequenciesDoNotDropTooLow,lem:frequencyStayingHigh,lem:increaseCriticalFrequency} inductively and prove our main result.

\begin{proof}[Proof of \Cref{thm:cgaOnLO}]
  We aim at showing that with high probability the critical position increases at least each $12\exp(6) \mu \ln n$ iterations, and that when all frequencies are greater than $1 - \frac{3}{n}$,  the optimum is sampled quickly. We proceed by induction on the frequencies. Formally, we prove that for every frequency (indexed by $i \in [n]$), with a probability of at least $(1-\frac{2}{n})(1-\frac{1}{n^2})^{i}(1 - \frac{1}{n^3})^{i}$, there exists an iteration $T_i\leq 12\exp(6)i \mu \ln n$ for which
  \begin{enumerate}
    \item the first $i - 1$ frequencies are greater than $1- \frac{3}{n}$,
    \item the $i$-th frequency is $1 - \frac{1}{n}$, and
    \item\label[prop]{item:otherFrequenciesRemainHigh} the first $i$ frequencies remain greater than $1 - \frac{3}{n}$ for the next $12\exp(6)(2n -i + 1)\mu \ln n$ iterations.
    \item All the frequencies remain greater than $\frac{1}{4}$ for the first $12\exp(6)\mu n \ln n$ iterations.
  \end{enumerate}
  We observe that the fourth property is a direct result of \Cref{lem:frequenciesDoNotDropTooLow} and holds with a probability of at least $1 - \frac{2}{n}$, regardless of the frequency considered. Thus, we do not mention it explicitly anymore but include it in the total probability.

  Consider the first frequency from the start of the algorithm on, that is, we regard the case $i=1$. By \Cref{lem:increaseCriticalFrequency}, it reaches $1 - \frac{1}{n}$ within $12\exp(6)\mu \ln n$ iterations with a probability of at least $1 - \frac{1}{n^2}$. At the iteration when it reaches $1 - \frac{1}{n}$, by \Cref{lem:frequencyStayingHigh}, it remains greater than $1 - \frac{3}{n}$ for the next $12\exp(6)(2n - i + 1)\mu \ln n$ iterations with a probability of at least $1 - \frac{1}{n^3}$. Thus, this iteration fulfills all properties to be~$T_1$.

  We assume that for some $i \in [n - 1]$, $T_i$ exists with a probability of at least $ (1 - \frac{2}{n})(1- \frac{1}{n^2})^i(1 - \frac{1}{n^3})^i$. We show that with a probability of at least $(1-\frac{2}{n})(1-\frac{1}{n^2})^{i + 1}(1 - \frac{1}{n^3})^{i + 1}$ $T_{i + 1}$ exists. By \Cref{item:otherFrequenciesRemainHigh} of $T_i$, the conditions for \Cref{lem:increaseCriticalFrequency} are fulfilled at iteration $T_i$, thus with a probability of at least $1 - \frac{1}{n^2}$, so a total probability of at least $(1-\frac{2}{n})(1-\frac{1}{n^2})^{i + 1}(1 - \frac{1}{n^3})^{i}$, the $i+1$-st frequency reaches $1- \frac{1}{n}$ within the next $12\exp(6)\mu \ln n$ iterations. Furthermore, when that happens, by \Cref{lem:frequencyStayingHigh}, with a probability of at least $1 - \frac{1}{n^3}$, so a total probability of $(1-\frac{2}{n})(1-\frac{1}{n^2})^{i + 1}(1 - \frac{1}{n^3})^{i + 1}$, the first $i + 1$ frequencies remain greater than $1 - \frac{3}{n}$ for the next $12\exp(6)(2n -i)\mu \ln n$ iterations. Thus, this time is the desired iteration $T_{i +1}$:
  \begin{enumerate}
    \item the first $i$ frequencies are greater than $1 - \frac{3}{n}$ due to \Cref{item:otherFrequenciesRemainHigh} for $T_i$,
    \item the $i + 1$-st frequency is equal to $1 - \frac{1}{n}$ by definition,
    \item the first $i + 1$ frequencies remain greater than $1 - \frac{3}{n}$ for the next $12\exp(6)(2n - i)\mu \ln n$ iterations.
  \end{enumerate}

  Therefore, by induction, with a probability of at least $(1 - \frac{2}{n})(1 - \frac{1}{n^2})^n(1 - \frac{1}{n^3})^n$, there exists an iteration $T_n \leq 12\exp(6)\mu n \ln n$.

  We observe that from iteration $T_n$ onwards, it is guaranteed that for another $12\exp(6)(n +1)\mu \ln n$ iterations all frequencies remain greater than $1 - \frac{3}{n}$, i.e., the probabilistic model of the cGA is very close to the optimum. By applying Bernoulli's inequality multiple times we obtain a lower bound on the derived probability: $(1 - \frac{2}{n})(1 - \frac{1}{n^2})^n(1 - \frac{1}{n^3})^n \geq (1 - \frac{2}{n})(1 - \frac{1}{n})(1 - \frac{1}{n^2}) \geq (1 - \frac{2}{n})(1 - \frac{1}{n})^2 \geq (1 - \frac{2}{n})^2 \geq 1- \frac{4}{n}$. This finishes the proof of the first statement in the theorem. To prove the second statement, we observe that at each iteration in which all frequencies are at least $1 - \frac 3n$, the probability of sampling an optimum is at least $(1-\frac3n)^n \ge (1-\frac38)^8 \ge 0.02$, using $n \ge 8$ again, so the probability of sampling an optimum in the next
  $50 \ln n \leq12\exp(6)(n +1)\mu \ln n$ iterations is at least $1 - (1-0.02)^{50 \ln n} \ge 1 - e^{-\ln n} = 1 - \frac 1n$. This yields a total success probability of at least $(1- \frac{4}{n})(1 - \frac{1}{n}) \geq 1 - \frac{5}{n}$. This finishes the proof.
\end{proof}

\textbf{Lessons from the analysis:}
We observe that the proof of our main result is significantly more involved than the proof of the corresponding result for the UMDA~\cite{DoerrK21tcs}. The reason is that indeed the optimization process is more complex for the cGA. To see this, consider the situation that the current frequency vector $p$ is such that the first $\ell = \Theta(n)$, $\ell < n$, frequencies are equal to $1-\frac 1n$, that is, they are at the upper border. In this situation, a sample $x$ has $\ell$ leading ones with probability $(1-\frac 1n)^\ell \ge (1-\frac 1n)^{n-1} \ge \frac 1e$. Consequently, when the sample size of the UMDA is sufficiently large (and this is necessary anyway to be in the low-genetic-drift regime) and the selection rate is a constant smaller than $\frac 1e$, the frequency update of the UMDA with high probability only depends on samples with~$\ell$ leading ones, that is, the new frequency vector again has the first~$\ell$ frequencies at the upper border. Consequently, an initial segment of frequencies at the upper border stays at this border for a long time, and the runtime analysis only has to care about how the critical frequency is moving to the upper border of the frequency range.

In contrast, due to the small sample size of two, the cGA does not observe such a stable optimization behavior. In fact, in the above situation, we see that with constant probability, the two offspring sampled by the cGA are such that both have a single zero in the first $\ell$ bits, and these are at different locations $i < j$. In this case, the frequency update favors the bit values of the sample having the zero on position $j$, which means that the $j$-th frequency is moving from $1-\frac 1n$ to $1- \frac 1n - \frac 1\mu$. Hence with constant probability, a frequency of the previously perfect initial segment is moving into the wrong direction. In consequence, these frequencies do not all stay at the upper border, but a constant fraction of them is strictly below it.

We cope with this difficulty by showing that, despite the effect just described, there is still a drift of the frequencies towards the upper border, ensuring that a single frequency $p_i$ with high probability does not go below $1-\frac 3n$ (Lemma~\ref{lem:frequencyStayingHigh}). This drift only is strong enough when the previous $i-1$ frequencies are close to the upper border, which is why Lemma~\ref{lem:frequencyStayingHigh} takes this assumption and why our main proof requires the slightly technical induction.

The additional technicality of our proof, the slightly weaker bound we obtain, and the above considerations suggest that the UMDA with its larger sample size allows for a more stable and (slightly) more efficient optimization process. This could be a reason why the UMDA is more often used in practical applications.

At the same time, we note that our result also shows that the cGA can cope with this slightly less stable optimization process. Our bounds are by a polylogarithmic factor worse than the bounds for the UMDA, which is a small discrepancy, and given we have no matching lower bound, it is not even clear if there is a discrepancy at all.

\section{Conclusion}
\label{sec:conclusion}

This work conducts the first runtime analysis for the cGA on the \leadingones benchmark, which was a result so far missing among the runtime analyses of classic univariate EDAs on classic, simple benchmark problems. Our analysis detects some additional difficulties compared to the optimization of this benchmark via the UMDA. We overcome these with a more complex proof and finally show a runtime guarantee only slightly inferior to the one of the UMDA. On the positive side, this result shows that also the simple cGA (having a single parameter only) is able to optimize \leadingones. However, our result also suggests that the smaller sample size of the cGA can be a disadvantage, and that more complex algorithms like the UMDA (having two parameters) might be more successful (with the right parameters).

From comparing our proof with the one for the UMDA in~\cite{DoerrK21tcs}, we observe some structural differences in the working principles of the cGA and the UMDA. This is noteworthy since in most existing results, the two algorithms seemed to behave very similarly.

The biggest open problem stemming from this work is proving a matching lower bound for our result, which is generally a hard problem for EDAs. Such a bound would answer the question whether the differences in the algorithms truly lead to a different asymptotic runtime (with the difference at most being a polylogarithmic factor), or whether the two algorithms despite different working principles actually are equally efficient.

\bibliographystyle{IEEEtran}
\bibliography{ich_master,alles_ea_master,rest}

\end{document}